%% file: main.tex
\documentclass{article}

\usepackage[final]{corl_2026} 

\usepackage[utf8]{inputenc}
\usepackage[T1]{fontenc}
\usepackage{amsmath, amssymb, amsfonts, amsthm, mathtools}
\usepackage{bm}
\usepackage{booktabs}
\usepackage[table,xcdraw]{xcolor}
\usepackage[most]{tcolorbox}
\usepackage{graphicx}
\usepackage{wrapfig}
\usepackage{float}
\usepackage{enumitem}
\usepackage{makecell}

\newcommand{\SO}{\mathrm{SO}}

\newcommand{\Sph}{\mathbb{S}^{2}}
\newcommand{\RR}{\mathbb{R}}
\newcommand{\OO}{\mathcal{O}}
\newcommand{\LL}{\mathcal{L}}
\newcommand{\eps}{\epsilon}

\newtheorem{proposition}{Proposition}

\title{Equivariant Visual-Tactile Diffusion Policy for Contact-Rich Manipulation}

\author{
\textbf{WONG Lik Hang Kenny\textsuperscript{1},
Yiyao Ma\textsuperscript{1},
Xiu-Shen Wei\textsuperscript{2},
Zelong Tan\textsuperscript{1},}\\
\textbf{Zhuheng Song\textsuperscript{1},
Dongsheng Xie\textsuperscript{1},
Kai Chen\textsuperscript{1},
Qi Dou\textsuperscript{1}}\\[1mm]
\textsuperscript{1}Department of Computer Science and Engineering, The Chinese University of Hong Kong\\
\textsuperscript{2}School of Computer Science and Engineering, Southeast University
}

\begin{document}
    \maketitle


\begin{abstract}
	Imitation learning for contact-rich manipulation requires high-quality expert data that is expensive to obtain. This makes learning a sample-efficient policy a key issue. To address this, we propose VISTA, a workspace-level equivariant visuotactile diffusion policy for data-efficient contact-rich imitation learning. VISTA projects visual and tactile observations into spherical tokens, injects tactile contact cues into visual spherical directions through permutation-equivariant spherical fusion, and rotates the fused harmonic representation using the end-effector orientation. The resulting representation conditions an equivariant diffusion policy to predict spatially consistent actions. Extensive experiments in both simulation and real-world robotic settings show that VISTA substantially improves data efficiency over strong visuotactile imitation learning baselines. Project website: \href{https://vista-paper.github.io/}{https://vista-paper.github.io/}
	\end{abstract}
	
	\keywords{Visuo-tactile Manipulation, Equivariant Learning, Diffusion Policy}
	
	
	\input{sections/introduction}
	
	\input{sections/related_work}
	
	\input{sections/method}
	
	\input{sections/experiment}

	\input{sections/conclusion}
	
	\clearpage
	\input{sections/acknowledgments}
	\bibliography{ref}
	
	\clearpage
	
	\appendix
	\input{sections/appendix}

	
	
	\end{document}

%% file: sections/introduction.tex
\section{Introduction}
\label{sec:introduction}

Many daily contact-rich tasks, such as inserting a USB drive or wiping a table, require tactile sensing in addition to vision. For robots, however, using both modalities effectively remains challenging. Recent robotics research has increasingly adopted imitation learning with visual and tactile feedback for contact-rich manipulation \cite{yu2023mimictouch, george2025vital,xue2025reactive}, but expert demonstrations are expensive to collect and often cover only a limited range of object poses, gripper poses, and contact configurations \cite{tsuji2025survey}. Moreover, for tasks such as insertion, success depends heavily on the policy's robustness to small deviations in observed object and robot poses \cite{kim2022active}. Therefore, how can we learn robust visuotactile policies with fewer demonstrations?

Equivariance provides a principled way to reduce the spatial coverage required from data for a policy to achieve spatial robustness \cite{wangincorporating, cohen2019general, thomas2018tensor, deng2021vector, jiang2019spherical}. In an equivariant policy, if the observation is transformed by a spatial transformation from a symmetry group, the predicted action should transform in the corresponding way.
By building this symmetry into the architecture, the model can improve spatial generalizability across object poses, gripper poses, and scene configurations with fewer demonstrations than non-equivariant policies \cite{wang2022equivariant, wang2022robot}. This is especially appealing for contact-rich manipulation, where demonstrations are costly, and task success is sensitive to small spatial deviations. However, existing equivariant manipulation policies are still largely vision-centric. They are typically designed around RGB images, point clouds, or other global geometric observations \cite{ryu2023equivariant}, limiting their ability to handle contact-rich settings where local tactile feedback is critical.

In this paper, we propose VISTA, a workspace-level equivariant visuotactile diffusion policy. Our method is inspired by the observation that both egocentric camera and tactile sensors are often mounted on the end effector of a robot for manipulation tasks; therefore, global workspace rotations change the sensor-to-world transform while preserving the local contact measurement. Concretely, our method first maps egocentric vision and tactile observations into local spherical representations. It then performs permutation-equivariant spherical fusion, allowing tactile contact cues to modulate visual directions while preserving the spherical structure of the representation. The fused spherical representation is then transformed by the end-effector pose rotation and processed with an equivariant diffusion policy to achieve workspace-level equivariance. As a result, VISTA improves data efficiency and generalization under limited demonstrations. We evaluate VISTA on both contact-rich simulation tasks and real-robot experiments and show that the proposed framework enables more data-efficient contact-rich manipulation learning.

Our main contributions are:
\begin{itemize}[leftmargin=1.2em]
    \item We introduce VISTA, extending the workspace-level equivariant backbone from Image-to-Sphere Policy (ISP) \cite{hu20263d} to the visuotactile setting for contact-rich tasks, enabling more data-efficient learning by incorporating local tactile feedback into an equivariant policy.

    \item We design a spherical cross-modal fusion module that integrates tactile cues into the structured visual representation, allowing the downstream equivariant diffusion policy to leverage the fused representation effectively. Across eight simulated tasks, this design improves mean success by 6.8 percentage points over direct visual-tactile feature concatenation on the sphere.

    \item We demonstrate data-efficient contact-rich manipulation across eight simulated tasks and three real-robot tasks: VISTA achieves 80.9\% mean success, outperforming EquiDiff+Tac by 15.9 percentage points, and 73.3\% average success on real robots.
\end{itemize}

%% file: sections/related_work.tex
\section{Related Work}
\label{sec:related_work}

\subsection{Equivariance in robot manipulation}

Equivariance embeds workspace symmetry into the policy class by requiring that transformed scenes, objects, and robot states yield actions transformed in the same physical manner \cite{seo2025se, cohen2016group, van2020mdp, cohen2016steerable, cohen2019general}. Existing equivariant manipulation methods primarily enforce this constraint in two observation domains. The first operates on 3D geometry such as point clouds \cite{simeonov2022neural,fuchs2020se, thomas2018tensor, deng2021vector,ryu2023equivariant, yang2024equibot, tie2024seed, zhang2025canonical}, where scalar, vector, or higher-order features are attached to 3D points and propagated according to their transformation rules \cite{yang2024equibot,tie2024seed,zhang2025canonical}; translation is typically handled by centering, relative coordinates, centroid recovery, or canonicalization, while rotation can be modeled with vector neurons \cite{deng2021vector}. Although this provides a natural interface for workspace-level equivariance, it often relies on reliable depth sensing, calibration, point cloud extraction, and alignment. The second route starts from RGB observations, motivated by large-scale robot datasets and practical systems based on monocular or wrist-mounted cameras; recent work has explored planar transport, bi-equivariant pick-and-place, equivariant diffusion for 6-DoF actions, and image-to-sphere projection for egocentric RGB inputs~\cite{huang2024fourier,EquiDiff,i2s}. However, both routes remain largely vision-centric: their symmetries are defined over visual observations, proprioception, and actions.

\subsection{Contact-rich visuotactile manipulation}
Tactile sensing provides contact-local cues---geometry, deformation, shear, slip, and force---that vision often cannot resolve, making it crucial for contact-rich manipulation. Existing work studies visuotactile representation learning through contrastive or simulation-based pretraining, while also integrating touch into policies for precise or reactive control~\cite{george2025vital,yang2024binding,chen2026univtac,luu2025manifeel,huang20243d,xue2025reactive, yang2022touch, tian2019manipulation, hogan2020tactile,kim2022active}. Although these methods show the value of tactile feedback when vision is insufficient, they focus mainly on representation, benchmark design, or reactive feedback rather than explicitly modeling workspace-level symmetry. Thus, tactile observations are usually encoded by CNNs, pretrained encoders, deformation features, or temporal modules and then fused with visual and proprioceptive inputs, allowing policies to use touch effectively but without explicitly enforcing consistent transformations among vision, touch, robot state, and 6-DoF actions under changes in object pose, gripper pose, or task frame. VISTA addresses this geometric gap by organizing visual and tactile RGB observations within a shared steerable latent interface before equivariant action generation.

\subsection{Tactile equivariance and geometric contact policies}
The most closely related works also study symmetry in tactile or contact-driven manipulation, but they realize equivariance in different settings and through different mechanisms. EquiTac defines tactile equivariance for in-hand object rotation by reconstructing tactile normal maps whose in-plane components transform as $SO(2)$ vectors while out-of-plane components remain invariant, and then uses an equivariant network to predict an angular residual for correcting a base policy~\cite{zhu2025equitac}; however, this symmetry is local to the fingertip/object-in-hand frame and serves as a residual orientation correction, rather than a global workspace-level policy that jointly transforms RGB observations, tactile feedback, proprioception, and 6-DoF action chunks. In contrast, geometric contact policies achieve workspace-level symmetry from a control-theoretic perspective: contact-rich $SE(3)$-equivariant manipulation relies on body-frame error and wrench representations with impedance control~\cite{seo2023contact}, while hierarchical variants further combine global reference-frame estimation with localized compliant policies \cite{seo2025equicontact}. Thus, although these methods provide important forms of tactile or contact equivariance, VISTA complements them by building workspace-level equivariance directly into an end-to-end diffusion policy over visuotactile and proprioceptive observations.

%% file: sections/method.tex
\section{Method}
\label{sec:method}

VISTA is a workspace-level equivariant visuotactile diffusion policy for contact-rich manipulation. Our method is compatible with a now-common robotics setting where both visual and tactile cameras are rigidly attached to the end effector. In this setting, VISTA first projects visual and tactile observations into spheres and fuses them, then rotates the fused representation into the workspace frame using the end-effector orientation. The representation conditions an equivariant diffusion policy that predicts 6-DoF action chunks. The architecture of VISTA is illustrated in Figure \ref{fig:model}.

\begin{figure}[t]
    \centering
    \includegraphics[width=1\linewidth]{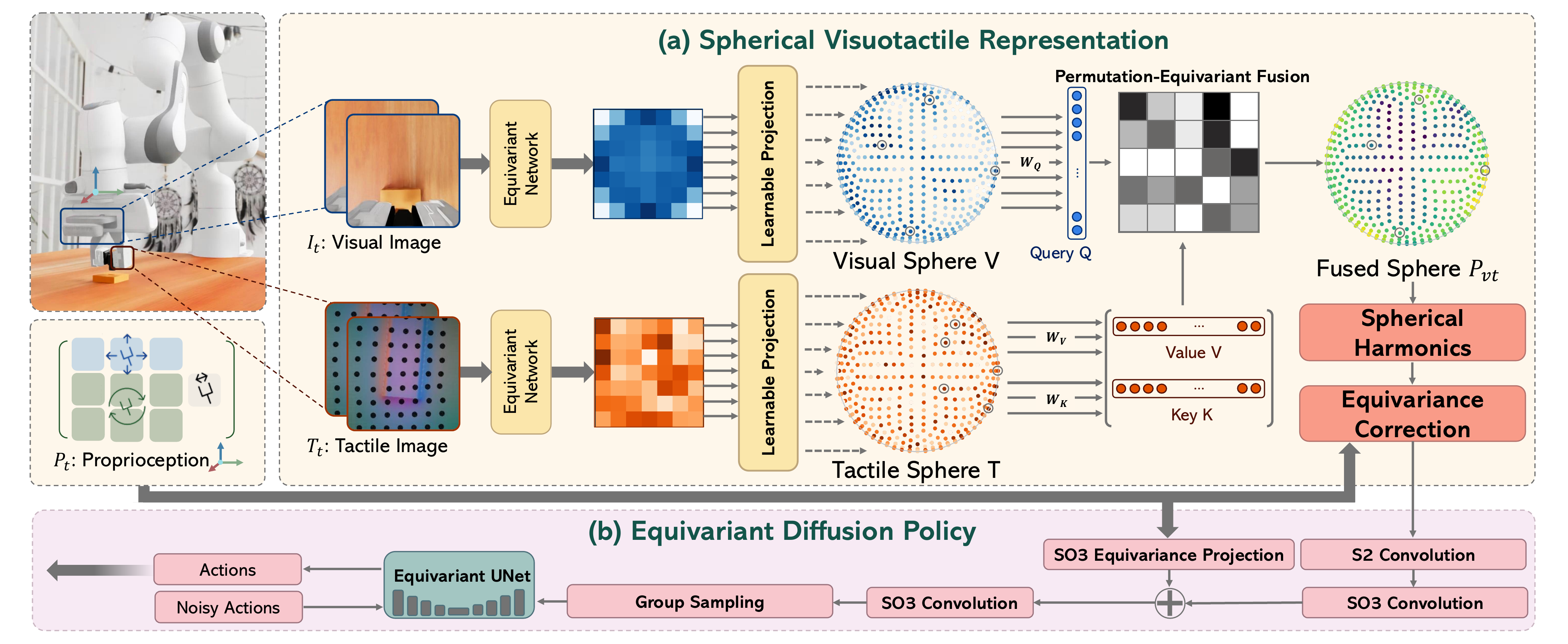}
    \caption{\small
     Model architecture of VISTA. (a) Vision and tactile inputs are encoded into spherical tokens, fused via permutation-equivariant spherical fusion, projected onto spherical harmonics, and aligned with the workspace using the end-effector orientation. (b) The aligned representation conditions a finite-group-equivariant diffusion policy to denoise action chunks and predict 6-DoF actions plus gripper commands.}
    \label{fig:model}
    \vspace{-5mm}
\end{figure}

\subsection{Problem Formulation}
\label{sec:problem_formulation}

We study visuotactile imitation learning for contact-rich manipulation. At time step \(t\), the policy observes a history of observation $\OO_t = \{o_{t-H_o+1},\ldots,o_t\}, o_t=(I_t,T_t,p_t)$, where \(H_o\) is the observation horizon, \(I_t\) is the egocentric vision image, \(T_t\) is the tactile image, and \(p_t\) is the robot proprioceptive state. The policy predicts an action chunk $\mathbf{A}_t = \{a_t,\ldots,a_{t+H_p-1}\} \in \RR^{H_p\times d_a}$, where \(H_p\) is the prediction horizon. Each action is represented as $a_t=(x_t,r^1_t,r^2_t,\gamma_t)\in\RR^{10}$, where \(x_t\in\RR^3\) is the end-effector translation command, \(r^1_t,r^2_t\in\RR^3\) are the first two axes of the 6D rotation representation~\cite{zhou2019continuity}, and \(\gamma_t\) is the gripper command.

Our goal is to learn a workspace-level equivariant visuotactile policy. Let \(G\subseteq\SO(3)\) denote the task-relevant rotation symmetry group. For a transformation \(g\in G\), the policy should satisfy
\begin{equation}
    \pi(g\cdot\OO_t)
    =
    g\cdot\pi(\OO_t).
    \label{eq:equivariance_goal}
\end{equation}

In implementation, \(G\) is instantiated as a finite rotation group such as \(I_{60}\) for tasks requiring 3D rotational structure and \(C_8\) for yaw-dominant tasks. Thus, the equivariance guarantee is exact for the selected finite group and serves as a practical discretization of continuous SO(3) or SO(2) symmetry. Here \(g\cdot\OO_t\) denotes a joint transformation of the robot–object–sensor system, in which the end-effector pose, the rigidly mounted wrist camera and tactile sensor, and the task-relevant object geometry are rotated together. Meanwhile, \(g\cdot\pi(\OO_t)\) denotes the transformation of the action output by the policy.

\subsection{Visuotactile Spherical Representation}
\label{sec:visuotactile_spherical_representation}

Under a global rotation, the relative geometry observed in both the egocentric camera and the tactile sensor is unchanged. Under this joint workspace rotation of the robot-object-sensor system, the ideal sensor-local measurements are invariant: the relative camera-object geometry and local tactile deformation remain unchanged. This property suggests a simple route to workspace equivariance, as proposed in Image-to-Sphere Policy (ISP) \cite{hu20263d}: first encode egocentric vision into a steerable spherical representation, then use the end-effector orientation to rotate the resulting representation into the workspace frame. VISTA additionally lifts tactile observations into spherical tokens and fuses them with the visual field before the same correction.

\paragraph{Spherical projection.} Concretely, for each observation step, VISTA first extracts 2D feature maps from the visual and tactile images using an equivariant ResNet \cite{he2016deep, wangsurprising}. Both the visual and tactile feature maps are then projected to a set of tokens on a discretized sphere \cite{klee2023image, i2s} (details in Sec. ~\ref{app:spherical_lifting}). Concretely, let $V=\{v_i\}_{i=1}^{P}$ denote the visual spherical tokens and $\mathcal{T}=\{\tau_j\}_{j=1}^{P}$ denote the tactile spherical tokens; \(P\) is the number of sampled sphere points for both vision and tactile.

\paragraph{Permutation-Equivariant Spherical Fusion.} VISTA uses a permutation equivariant cross-attention mechanism \cite{vaswani2017attention} to inject tactile evidence into each visual direction. For each visual spherical token \(v_i\), VISTA queries tactile spherical tokens \(\{\tau_j\}_{j=1}^{P}\):
  \begin{align}
      q_i &= W_q v_i,
      &
      k_j &= W_k \tau_j,
      &
      u_j &= W_u \tau_j,
      \label{eq:attention_qku}
      \\
      \alpha_{ij}
      &=
      \operatorname{softmax}_{j}
      \left(
      \frac{q_i^\top k_j}{\sqrt{d_h}}
      \right),
      &
      m_i
      &=
      \sum_j \alpha_{ij}u_j .
      \label{eq:attention_message}
  \end{align}
  Here \(m_i\) is the tactile message aggregated for visual direction \(i\). The fused spherical token is computed by a residual token-wise fusion block:
  \begin{equation}
      f_i
      =
      \operatorname{LN}
      \left(
      v_i
      +
      \psi([v_i,m_i])
      \right),
      \label{eq:fused_token}
  \end{equation}
  where \(\psi\) is a small MLP, and \([v_i,m_i]\) denotes channel-wise concatenation. The resulting tactile-conditioned spherical field is $F=\{f_i\}_{i=1}^{P}$.

\begin{proposition}[Preservation of spherical indexing under tactile fusion]
\label{prop:index_equivariance_fusion}
Let \(\mathcal{A}(V,\mathcal{T})\) denote the fusion module in
Eqs.~\eqref{eq:attention_qku}--\eqref{eq:fused_token}, where
\(V=\{v_i\}_{i=1}^{P}\) and \(\mathcal{T}=\{\tau_j\}_{j=1}^{P}\)
denote the visual and tactile spherical tokens, respectively.
For any permutation matrix \(\Pi\) acting on the visual spherical
tokens,
\[
    \mathcal{A}(\Pi V,\mathcal{T})
    =
    \Pi\mathcal{A}(V,\mathcal{T}).
\]
Hence, tactile conditioning preserves the visual spherical indexing
under the sampled rotations used by the downstream equivariant
representation.
\end{proposition}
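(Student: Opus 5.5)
The plan is to write the fusion module $\mathcal{A}$ as a composition of a \emph{token-wise} map and a \emph{set-level} quantity that depends only on the tactile tokens, and then check that permutations of the visual index pass through each stage. Let $\sigma$ be the permutation of $\{1,\dots,P\}$ associated with $\Pi$, so that $(\Pi V)_i = v_{\sigma(i)}$. We then prove that the $i$-th output token on input $(\Pi V,\mathcal{T})$ equals $f_{\sigma(i)}$, the $\sigma(i)$-th output token on input $(V,\mathcal{T})$. This is exactly $\mathcal{A}(\Pi V,\mathcal{T})=\Pi\mathcal{A}(V,\mathcal{T})$.

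The argument proceeds through Eqs.~\eqref{eq:attention_qku}--\eqref{eq:fused_token} in order.

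\textbf{Step 1 (tactile quantities).} The keys $k_j=W_k\tau_j$ and values $u_j=W_u\tau_j$ involve only $\mathcal{T}$. They are therefore identical for both inputs.

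\textbf{Step 2 (queries and weights).} With permuted visual tokens, the query at position $i$ is $W_q v_{\sigma(i)}=q_{\sigma(i)}$. The softmax in Eq.~\eqref{eq:attention_message} normalizes over $j$ only, for a fixed $i$. Hence the new weights are $\alpha'_{ij}=\alpha_{\sigma(i)j}$, and the message is $m'_i=\sum_j\alpha_{\sigma(i)j}u_j=m_{\sigma(i)}$.

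\textbf{Step 3 (fusion).} Eq.~\eqref{eq:fused_token} applies the same function to the pair $(v_i,m_i)$ at every position: the MLP $\psi$ has shared weights, and LN normalizes over the channels of a single token. Therefore $f'_i=\operatorname{LN}\bigl(v_{\sigma(i)}+\psi([v_{\sigma(i)},m_{\sigma(i)}])\bigr)=f_{\sigma(i)}$.

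The only point needing care is the locality assumption in Step 3. LN must be a per-token normalization over channels, not a normalization across tokens, and $\psi$ must act token-wise with no positional encoding added to the visual tokens. If either condition failed, the output at position $i$ could depend on the index $i$ itself rather than only on $v_{\sigma(i)}$, and the identity would break. I would state these as the readings of Eq.~\eqref{eq:fused_token}, matching the standard transformer convention. I would also note that the softmax is over the tactile index $j$, so the tactile tokens need no ordering.

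For the concluding sentence of the proposition, take a sampled rotation $g$ in the finite group $G$. If the sphere discretization is closed under $G$, then $g$ acts on the visual spherical tokens as a permutation $\Pi_g$ of the sample points. Applying the identity with $\Pi=\Pi_g$ shows that the fused field transforms under $g$ exactly as the visual field does. The downstream harmonic projection and the end-effector rotation therefore see the same permutation structure as in the vision-only ISP pipeline. This last step is an interpretation of the identity rather than additional computation: we treat the tactile set $\mathcal{T}$ as fixed, because the sensor-local measurements are invariant under the joint workspace rotation described in Sec.~\ref{sec:visuotactile_spherical_representation}.
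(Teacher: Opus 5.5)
Your proposal is correct and follows the same argument as the paper's appendix proof. Permuting the visual tokens permutes the query rows, the row-wise softmax over the tactile index carries that permutation to the attention weights and messages ($\Lambda'=\Pi\Lambda$, $M'=\Pi M$ in the paper's matrix form), and the token-wise residual, MLP, and layer normalization then commute with $\Pi$. Your $\sigma$-index bookkeeping, your per-token-LN and no-positional-encoding caveats, and your conditional reading of the final ``hence'' spell out what the paper leaves implicit; that reading needs the grid to be closed under $G$, which a forward-hemisphere grid is not for $I_{60}$.
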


\begin{wrapfigure}{r}{0.5\linewidth}
    \centering
    \includegraphics[width=\linewidth]{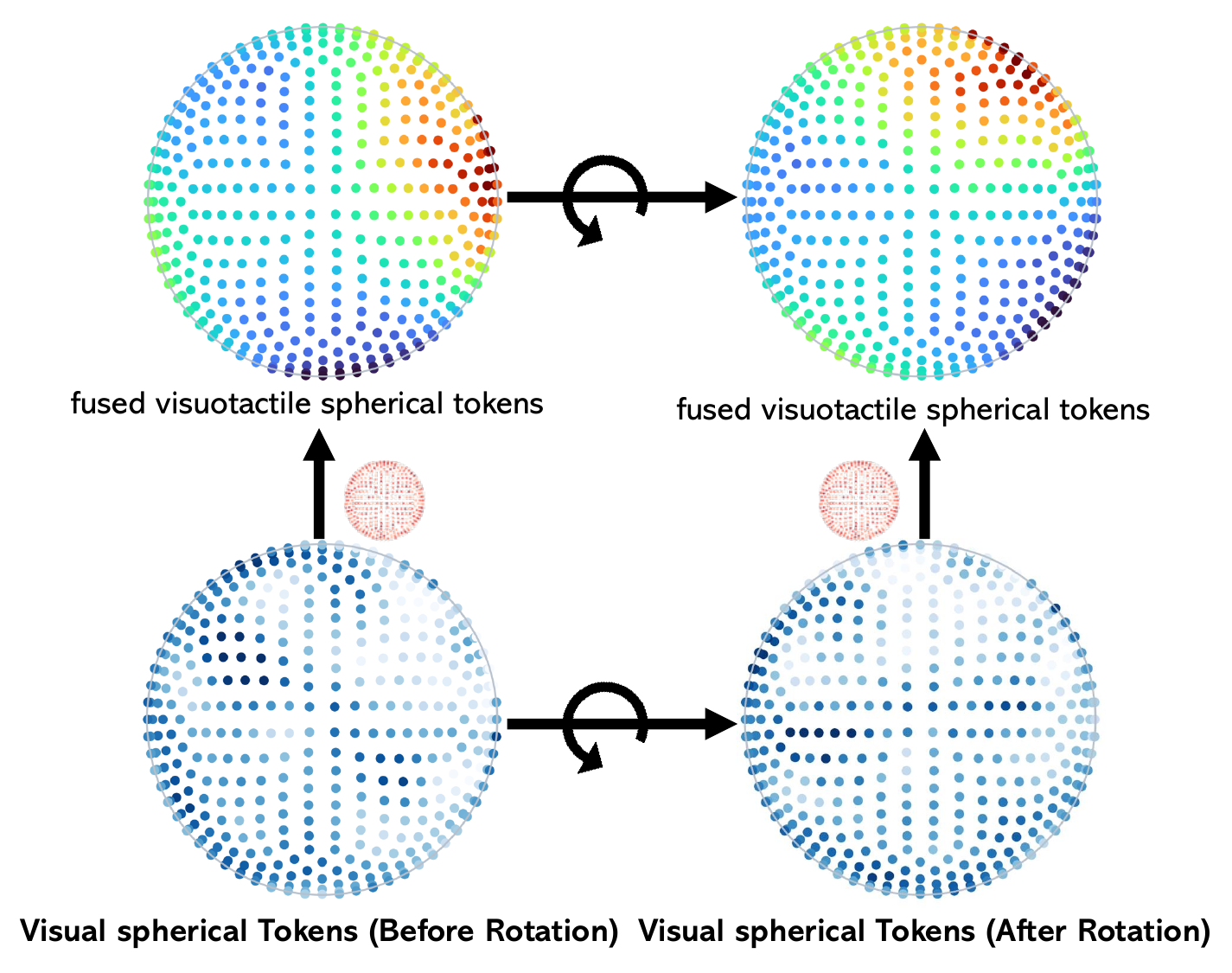}
    \caption{\small \textbf{Permutation-Equivariant Spherical Fusion}. \textbf{Bottom}: visual spherical tokens before and after a token-space rotation, with tactile tokens fixed. \textbf{Top}: fused visuotactile tokens rotate coherently with the visual token structure, showing that tactile cross-attention preserves the spherical indexing.}
    \label{fig:fusion_equivariance}
    \vspace{-5mm}
\end{wrapfigure}

\begin{proof}[Proof sketch]
Permuting the visual tokens only permutes the rows of the query matrix. Since the attention softmax is applied row-wise over tactile tokens, the attention weights and messages are permuted by the same visual permutation. The residual projection, MLP, and layer normalization are applied token-wise, so they also commute with the same permutation. Therefore, the fused output preserves the indexing of the visual sphere. A full proof is provided in Appendix~\ref{app:fusion_proof}.
\end{proof}

Proposition~\ref{prop:index_equivariance_fusion} shows that tactile fusion preserves the directional indexing of the visual sphere. As illustrated in Figure~\ref{fig:fusion_equivariance}, when the visual spherical field is re-indexed by a discretized rotation, the fused field undergoes the same re-indexing. Thus, tactile conditioning can modify the feature content associated with each visual direction without changing its correspondence to the underlying sphere. Therefore, this incorporates tactile information without disrupting the visual geometric structure.

\paragraph{Harmonic projection and equivariance correction.}
Following ISP \cite{hu20263d}, VISTA projects the tactile-conditioned spherical tokens to spherical harmonic coefficients. Let \(s_i\in\Sph\) be the sphere point associated with token \(f_i\), and let \(Y(s_i)\) denote the vector of spherical harmonics up to degree \(l_{\max}\). The fused coefficients are computed as $C_t = \sum_{i=1}^{P} f_i\,Y(s_i)\,w_i$, where \(w_i\) is the quadrature weight. We then align the local sensor representation to the workspace frame using the end-effector orientation \(R_t\). For each harmonic degree \(\ell\), the correction is $\widehat{c}_{\ell,t} = D^\ell(R_t)c_{\ell,t}$, where \(D^\ell\) is the Wigner-\(D\) representation of \(\SO(3)\). The corrected coefficients are processed by spherical convolution layers to produce an \(\SO(3)\)-structured observation feature. Proprioception is encoded using scalar features for gripper state and vector features for end-effector position and orientation axes. The fused visuotactile and proprioceptive features are concatenated to form the final observation representation used by the downstream diffusion policy.

\subsection{Equivariant Diffusion Policy}
\label{sec:equivariant_diffusion_policy}

VISTA uses a diffusion policy \cite{hu20263d,chi2025diffusion} to model multimodal action chunks \cite{ho2020denoising}. Let \(E_\theta(\OO_t)\) denote the encoded visuotactile observation. During training, diffusion noise is applied on a clean action chunk \(\mathbf{A}_t\) at timestep \(k\), producing \(\mathbf{A}_t^k\). The denoising network \(\Gamma_\theta\) is trained to predict the added noise:
\begin{equation}
    \LL_{\mathrm{diff}}
    =
    \mathbb{E}
    \left[
    \left\|
    \Gamma_\theta(\mathbf{A}_t^k,E_\theta(\OO_t),k)
    -
    \eps
    \right\|_2^2
    \right].
    \label{eq:diffusion_loss}
\end{equation}
At inference time, VISTA samples an action chunk by iteratively denoising a Gaussian action sequence conditioned on the current visuotactile observation history. To make the denoising process equivariant, VISTA represents both the noisy action chunk and observation condition as steerable features. The translational action and two rotation axes are treated as vector features, while the gripper command is treated as a scalar. The equivariant decoder is instantiated with a rotation group chosen according to the task symmetry. For tasks requiring full spatial rotational equivariance, we use an \(I_{60}\) icosahedral subgroup to represent the \(SO(3)\) setting. For tasks where the relevant symmetry is rotation around the gravity axis, we use a \(C_8\) cyclic group to represent the \(SO(2)\) setting.

\subsection{Equivariance Analysis}
\label{sec:equivariance_analysis}

\paragraph{Assumptions.}
Throughout this subsection, we assume: (1) The analysis considers joint workspace rotations that transform the robot, task-relevant scene geometry, and the rigidly mounted wrist camera and tactile sensor together. (2) nuisance factors such as background and illumination remain fixed, so the corresponding sensor-local visual and tactile observations remain unchanged before workspace correction. (3) The implemented policy is exactly equivariant only to the selected finite task symmetry group \(G\). In the deployed policy, the symmetry group \(G\) in Eq.~\eqref{eq:equivariance_goal} is instantiated as a sampled finite subgroup of \(\SO(3)\), such as \(I_{60}\) or \(C_8\).

Under these assumptions, VISTA satisfies the workspace-level equivariance objective in Eq.~\eqref{eq:equivariance_goal}:

\begin{proposition}[Workspace equivariance of VISTA]
\label{prop:vista_equivariance}
The implemented VISTA policy satisfies $\pi(g\cdot\OO_t) = g\cdot\pi(\OO_t), \forall g\in G$.
\end{proposition}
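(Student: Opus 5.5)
The plan is to write the policy as a composition of stages and check how each stage transforms under $g\in G$. The equivariance of $\pi$ then follows by chaining these rules. Write $\pi = \mathcal{D}\circ E_\theta$, where $E_\theta(\OO_t)=\Phi\bigl(\{D^\ell(R_t)\,c_{\ell,t}(I_t,T_t)\}_\ell,\ \phi(p_t)\bigr)$. Here $c_{\ell,t}$ are the harmonic coefficients of the fused field $F$, $\Phi$ collects the spherical convolutions and the concatenation with proprioception, and $\mathcal{D}$ is the iterated denoising map built from $\Gamma_\theta$.

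\textbf{Sensor-local stages.} Under the joint rotation $g$, Assumptions (1)--(2) give that $I_t$ and $T_t$ are unchanged. Hence the spherical tokens $V,\mathcal{T}$, the fused field $F=\mathcal{A}(V,\mathcal{T})$, and the coefficients $C_t=\sum_i f_i Y(s_i)w_i$ are all unchanged. Only the end-effector orientation changes, to $gR_t$. Proposition~\ref{prop:index_equivariance_fusion} is not strictly needed on this route, because the tokens do not move. I would still note it as the reason the fusion does not break the correspondence $f_i\leftrightarrow s_i$ on which the harmonic projection depends.

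\textbf{Correction and encoder.} The homomorphism property of Wigner matrices gives $D^\ell(gR_t)c_{\ell,t}=D^\ell(g)\,\widehat{c}_{\ell,t}$. So the corrected coefficients transform by the $\SO(3)$ representation of $g$, restricted to $G$. The spherical convolutions are $G$-equivariant by construction. The proprioceptive encoding $\phi$ maps the position and orientation axes to vector features, which rotate by $g$, and the gripper state to a scalar, which is invariant. Therefore $E_\theta(g\cdot\OO_t)=\rho(g)E_\theta(\OO_t)$ for the induced representation $\rho$. If a history of $H_o$ steps is used, the same argument applies per step and then under concatenation.

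\textbf{Denoising.} The noise-prediction network satisfies $\Gamma_\theta(g\mathbf{A}^k,\rho(g)E,k)=g\,\Gamma_\theta(\mathbf{A}^k,E,k)$. This holds because the decoder is $G$-equivariant, with $x,r^1,r^2$ treated as vectors and $\gamma$ as a scalar. Each DDPM/DDIM update is an affine combination of $\mathbf{A}^k$, $\Gamma_\theta$, and fresh Gaussian noise, so the update map is equivariant. The initial noise $\mathbf{A}^K\sim\mathcal{N}(0,I)$ has a distribution invariant under the orthogonal action of $g$. Induction over $k$ then shows that the output distribution transforms as $g\cdot\pi(\OO_t)$. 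Equivalently, for a fixed noise seed $\eps$ one has $\pi(g\cdot\OO_t;g\eps)=g\,\pi(\OO_t;\eps)$.

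\textbf{Main obstacle.} I expect the hardest part to be stating exactly in what sense equality holds for a stochastic sampler, which requires the distributional or noise-coupled statement above. The second difficulty is checking that the discretized implementation is exactly $G$-equivariant. The finite-$P$ quadrature commutes with $D^\ell(g)$ only if the harmonics are transformed in coefficient space, which they are here because the correction is applied after projection. The rotation-6D action channels must also be treated as vectors rather than as a matrix. I would resolve the stochasticity issue first by adopting the distributional interpretation.
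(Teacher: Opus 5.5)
Your proposal follows the same route as the paper's proof: the uncorrected fused coefficients are unchanged by sensor-local invariance, the correction transforms by the Wigner-$D$ homomorphism $D^\ell(gR_t)=D^\ell(g)D^\ell(R_t)$, proprioception obeys the scalar/vector rules, and the decoder is equivariant. Your explicit treatment of the stochastic sampler, as either a noise-coupled equality or a distributional one via isotropy of the Gaussian, makes precise the step the paper covers only with a one-line appeal to the equivariant diffusion network.
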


\begin{proof}
Consider one observation \(o_t\). Under a global rotation \(g\), the robot end effector, vision camera, tactile sensor, objects, and contact geometry are rotated together. By the sensor-local observation invariance assumption we made earlier, the local visual and tactile features remain unchanged prior to correction. Therefore, the uncorrected fused coefficients satisfy $c_{\ell,t}(g\cdot o_t) = c_{\ell,t}(o_t)$. The end-effector orientation, however, transforms as $R_{g\cdot o_t}=gR_t.$
Using the equivariance correction, we obtain $
    \widehat{c}_{\ell,t}(g\cdot o_t) = D^\ell(R_{g\cdot o_t})c_{\ell,t}(g\cdot o_t) = D^\ell(gR_t)c_{\ell,t}(o_t) = D^\ell(g)D^\ell(R_t)c_{\ell,t}(o_t) = D^\ell(g)\widehat{c}_{\ell,t}(o_t)$.
Thus, the corrected visuotactile observation representation is equivariant. The proprioceptive encoder uses the same scalar and vector transformation rules, so its output is equivariant as well. The resulting representation can therefore be processed by an equivariant diffusion network to output equivariant actions.
\end{proof}

%% file: sections/experiment.tex
\section{Experiments}
\label{sec:experiments}

We evaluate VISTA on simulated and real contact-rich manipulation tasks. The experiments are organized to answer the following questions:
    \textbf{(1)} Does VISTA improve contact-rich imitation learning under a limited demonstration budget?
    \textbf{(2)} Does tactile spherical representation provide benefits beyond an image-only equivariant policy?
    \textbf{(3)} Does tactile-conditioned spherical fusion improve over simpler ways of injecting tactile?
    \textbf{(4)} Does VISTA improve data efficiency as the number of demonstrations changes?
    \textbf{(5)} Does the same design transfer to real-robot contact-rich manipulation?

\subsection{Simulation Experiments}
\label{sec:sim_experiments}

\noindent\textbf{Experimental setup.}\;
We compare with \textbf{ManiFeel}~\cite{luu2025manifeel}, a non-equivariant visuotactile diffusion policy. \textbf{VITAL}~\cite{george2025vital} and \textbf{UniVTAC}~\cite{chen2026univtac} are Action-Chunking Transformer (ACT)~\cite{zhao2023learning}-based policies that fuse pretrained visual and tactile encoders without an explicit workspace-equivariant representation. \textbf{EquiDiff + Tac}~\cite{EquiDiff}, adopted from \cite{luu2025manifeel}, provides the closest equivariant comparison: it uses a \(C_8\)-equivariant visual ResNet, treats tactile features as trivial \(C_8\) representations, and conditions a \(C_8\)-equivariant diffusion U-Net. It therefore has equivariant decoding and tactile input, but no tactile spherical lifting or visual--tactile fusion before harmonic projection. For a fair comparison, all baselines and ablations use the same end-effector-based 10-D absolute action interface as VISTA. All methods share the demonstration source and evaluation protocol. More details are in Appendix~\ref{app:sim_protocol_details}. Moreover, we use a wide-FOV wrist-camera setting with focal length 1.0 so the camera view covers the task workspace.

\begin{figure}[t]
    \centering
    \includegraphics[width=\linewidth]{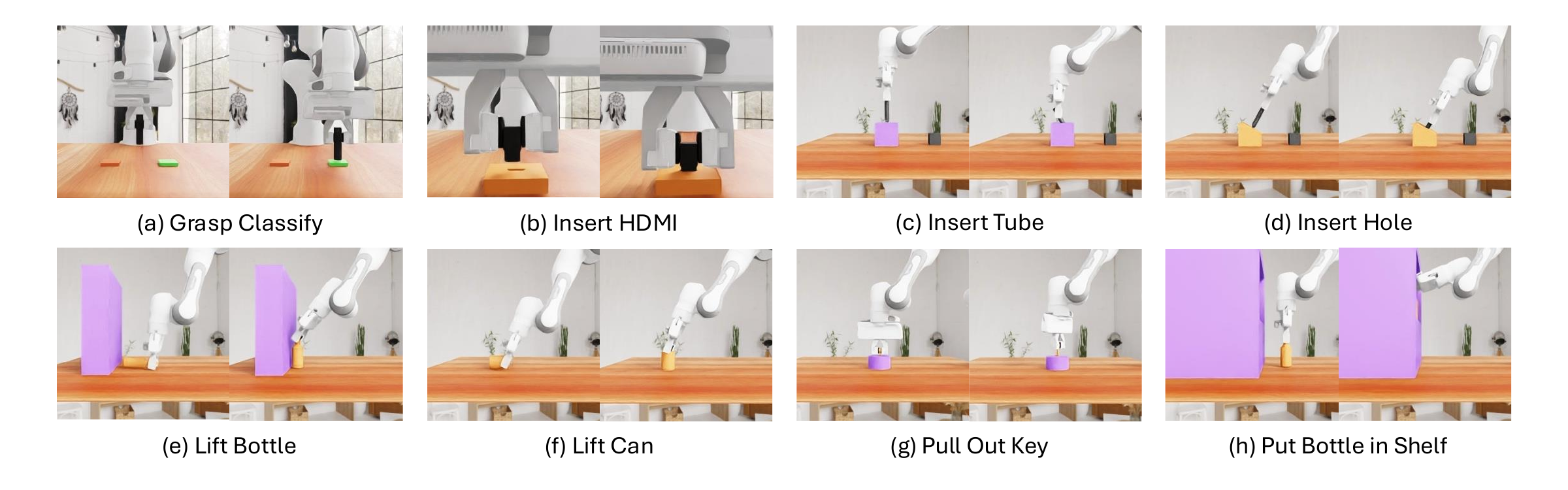}
    \caption{Simulation environments from UniVTAC~\cite{chen2026univtac}. The benchmark includes insertion, extraction, lifting, placement, and contact-based classification tasks.}
    \label{fig:sim_environments}
    \vspace{-3mm}
\end{figure}

\noindent\textbf{Main results.}\;
Table~\ref{tab:simulation_results_main_revised} reports the main simulation results. VISTA achieves the highest mean success rate, 80.9\%. The comparison first shows that fusing vision and tactile sensing alone is insufficient for data-efficient spatial generalization. ManiFeel, VITAL, and UniVTAC all use visuotactile observations, but they largely treat visual and tactile signals as generic input features and do not inject geometric constraints. With only 50 demonstrations, this leads to weak performance on geometry-sensitive tasks such as insert HDMI, insert hole, lift bottle, and lift can. In contrast, VISTA improves the mean success rate over ManiFeel, VITAL, and UniVTAC by 32.1, 37.1, and 59.6 percentage points, respectively. This supports our central motivation: encoding the workspace rotation structure into the policy class reduces the spatial coverage that demonstrations must provide. 

EquiDiff + Tac, on the other hand, uses an equivariant diffusion decoder with tactile input but no spherical lifting or spherical fusion~\cite{luu2025manifeel,EquiDiff}. VISTA still improves over this strongest baseline by 15.9 percentage points. This indicates that equivariant action generation with tactile input is insufficient on its own. Together with the controlled ablations in Sec.~\ref{sec:ablation_study}, this result provides empirical evidence that the proposed tactile spherical lifting and fusion are more effective than treating tactile features as generic conditioning.

\begin{table}[t]
\centering
\caption{Success rates in simulation. All methods are trained with 50 demonstrations per task and evaluated over 40 rollouts.}
\label{tab:simulation_results_main_revised}
\begingroup
\small
\setlength{\tabcolsep}{3.4pt}
\renewcommand{\arraystretch}{1.18}
\begin{tabular}{@{}lccccccccc@{}}
\toprule
\textbf{Method} &
\makecell{\textbf{Grasp}\\\textbf{Classify}} &
\makecell{\textbf{Insert}\\\textbf{HDMI}} &
\makecell{\textbf{Insert}\\\textbf{Hole}} &
\makecell{\textbf{Insert}\\\textbf{Tube}} &
\makecell{\textbf{Lift}\\\textbf{Bottle}} &
\makecell{\textbf{Lift}\\\textbf{Can}} &
\makecell{\textbf{Pull Out}\\\textbf{Key}} &
\makecell{\textbf{Put Bottle}\\\textbf{in Shelf}} &
\textbf{Mean} \\
\midrule
ManiFeel & \textbf{100.0} & 12.5 & 32.5 & 45.0 & 55.0 & 10.0 & 95.0 & 40.0 & 48.8 \\
VITAL & \textbf{100.0} & 7.5 & 27.5 & 55.0 & 20.0 & 17.5 & 65.0 & 57.5 & 43.8 \\
UniVTAC & 60.0 & 5.0 & 17.5 & 32.5 & 0.0 & 0.0 & 37.5 & 17.5 & 21.3 \\
EquiDiff + Tac & 95.0 & 10.0 & 42.5 & 92.5 & 57.5 & 37.5 & \textbf{97.5} & \textbf{87.5} & 65.0 \\
VISTA & \textbf{100.0} & \textbf{40.0} & \textbf{55.0} & \textbf{95.0} & \textbf{80.0} & \textbf{92.5} & \textbf{97.5} & \textbf{87.5} & \textbf{80.9} \\
\bottomrule
\end{tabular}
\endgroup
\vspace{-3mm}
\end{table}

\subsection{Ablation Study}
\label{sec:ablation_study}

We conduct ablations under the same simulation protocol to validate key components of VISTA. Results are summarized in Figure~\ref{fig:sim_ablation_efficiency_summary}, with per-task success rates and variant definitions in Appendix~\ref{app:detailed_sim_results} and Appendix~\ref{app:ablation_variants}. For clarity, the ablations differ only in how tactile information enters the shared ISP-style backbone. \textbf{ISP} is the image-only version of VISTA. \textbf{VISTA-global-\(l0\)} adds a global tactile vector only to the \(\ell=0\) coefficients. \textbf{Early Fusion} concatenates 2D visual and tactile maps before spherical lifting. \textbf{ISP+Concat-Tac} separately lifts both modalities and concatenates spherical tokens without cross-token interaction. \textbf{VISTA} uses visual tokens as queries and tactile tokens as keys/values before harmonic projection. The visual backbone, equivariance correction, decoder, action interface, and finite group are otherwise fixed.

\textbf{(a) Contribution of the tactile pathway.} Adding VISTA's tactile spherical lifting and fusion raises the eight-task mean from 68.1\% to 80.9\%, a 12.8-point improvement under the same protocol. This shows that the visual spherical backbone alone is insufficient and that contact observations provide information complementary to wrist RGB. \textbf{(b) Tactile-conditioned spherical fusion.} Holding the visual spherical backbone fixed, VISTA-global-\(l0\), Early Fusion, and ISP+Concat-Tac achieve 69.4\%, 72.8\%, and 74.1\%, respectively, compared with 80.9\% for VISTA. Thus, generic tactile conditioning improves over ISP by only 1.3--6.0 points, whereas VISTA gains 12.8 points and exceeds the strongest alternative fusion scheme by 6.8 points. This suggests that allowing each visual spherical direction to query tactile tokens before harmonic projection provides a more effective interface for the downstream equivariant backbone. \textbf{(c) Demonstration budget.} With 50 demonstrations, VISTA reaches 80.9\%, already 5.3 points above EquiDiff + Tac trained with 100 demonstrations (75.6\%). Increasing VISTA to 100 demonstrations further raises the mean to 84.7\% (+3.8 points). The strong low-data result and continued improvement with more data indicate improved data efficiency without preventing further scaling.

\begin{figure}[H]
    \centering
    \includegraphics[width=0.90\linewidth]{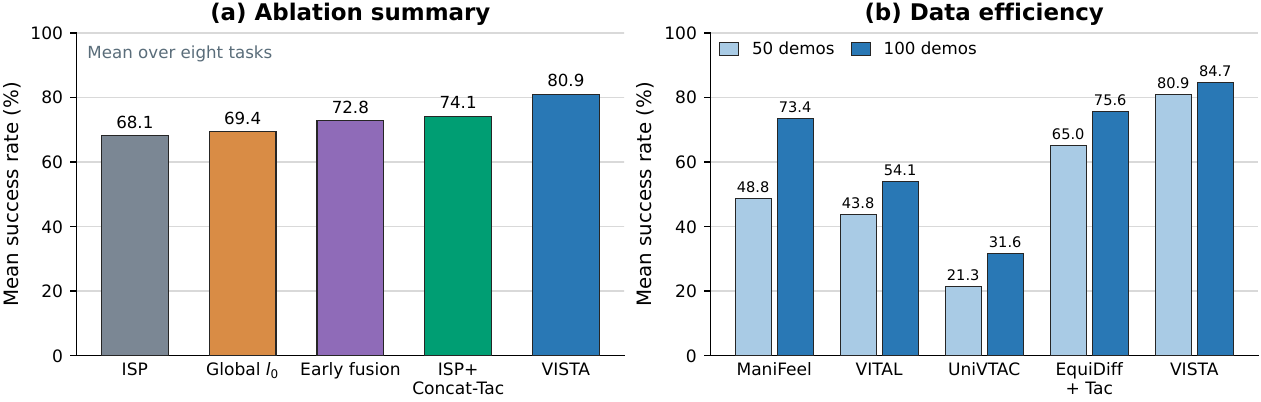}
    \caption{\small Summary of simulation ablations. Left: eight-task-mean fusion ablation over ISP, VISTA-global-\(l0\), Early fusion, ISP+Concat-Tac, and VISTA (definitions in Appendix~\ref{app:ablation_variants}). Right: demo-budget ablation over all eight simulation tasks with 50 or 100 demonstrations. Detailed per-task results are provided in Appendix~\ref{app:detailed_sim_results}.}
    \vspace{-6mm}
    \label{fig:sim_ablation_efficiency_summary}
\end{figure}

\begin{wrapfigure}{r}{0.48\linewidth}
    \centering
    \includegraphics[width=\linewidth]{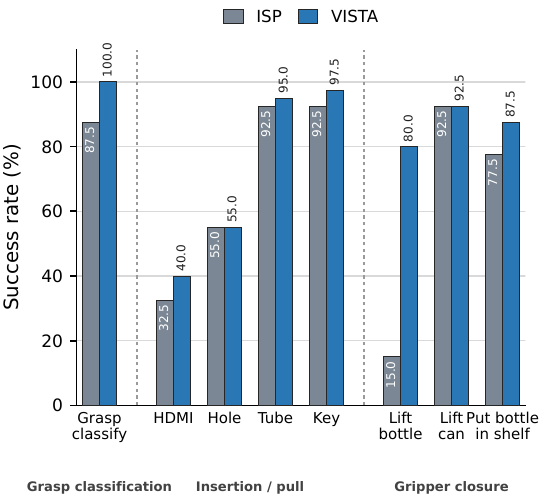}
    \vspace{-4mm}
    \caption{\small \textbf{Task-level ISP--VISTA comparison.} Success rates on all eight simulation tasks, grouped into grasp classification, insertion/pull, and tasks requiring gripper-closure control.}
    \label{fig:isp_vista_task_comparison}
    \vspace{-4mm}
\end{wrapfigure}

\noindent\textbf{(d) Tactile gains for different tasks.}\; \textbf{First,} on grasp classification, ISP already achieves \(87.5\%\), as wrist RGB retains visual cues for different textures. VISTA reaches \(100\%\), showing that the fused policy distinguishes all evaluated classes from the visuotactile observations. \textbf{Second,} across insertion and pull-out tasks, the gains are relatively small: \(7.5\), \(0.0\), \(2.5\), and \(5.0\) points on HDMI, hole, tube, and key. This is because tactile information becomes more informative after meaningful contact. Moreover, unlike EquiTac \cite{zhu2025equitac}, VISTA does not explicitly model contact intrinsic geometry which could be used to correct pre-contact visual misalignment. \textbf{Third,} among three tasks that require gripper-closure control, Lift Bottle improves from \(15.0\%\) to \(80.0\%\), while Put Bottle in Shelf improves from \(77.5\%\) to \(87.5\%\). These results show that VISTA effectively leverages tactile cues under pronounced or sustained contact deformation, where tactile feedback becomes particularly informative for task execution.

\vspace{-2mm}
\subsection{Real-Robot Experiments}
\label{sec:real_experiments}

We evaluate VISTA on three real-robot contact-rich tasks as shown in Figure \ref{fig:real_environments}: insert USB, insert pencil, and erase. These tasks involve tight insertion or sustained surface contact, requiring tactile feedback for contact detection, pose correction, alignment, and stable interaction. The wrist RGB camera and tactile sensor are both mounted on the end effector. We use the same observation and action interface as in simulation, including wrist RGB, tactile RGB, proprioception, and 10-D action chunks. Each method is trained from real demonstrations and evaluated over 10 trials per task. We compare against EquiDiff + Tac, the strongest non-VISTA equivariant baseline in simulation, to test whether adding tactile input to an equivariant diffusion policy is sufficient on hardware. Further protocol details and offline model inference latency are provided in Appendix~\ref{app:real_protocol_details} and ~\ref{app:inference_latency}.

\begin{table}[H]
\centering
\caption{Success rates on real-robot contact-rich manipulation tasks. Mean reports the average number of successful trials out of 10 across tasks.}
\label{tab:real_world_results_revised}
\begingroup
\small
\setlength{\tabcolsep}{6pt}
\renewcommand{\arraystretch}{1.2}
\begin{tabular}{@{}lcccc@{}}
\toprule
\textbf{Method} &
\textbf{Erase} &
\textbf{Insert USB} &
\textbf{Insert pencil} &
\textbf{Mean} \\
\midrule
EquiDiff + Tac & 1/10 & 3/10 & 0/10 & 1.33/10 \\
VISTA & 9/10 & 5/10 & 8/10 & 7.33/10 \\
\bottomrule
\end{tabular}
\endgroup
\vspace{-2mm}
\end{table}

\vspace{-2mm}

Table~\ref{tab:real_world_results_revised} shows that VISTA achieves 7.33/10 mean success across tasks, compared with 1.33/10 for EquiDiff + Tac. The gains are most pronounced on insert pencil and erase, where tactile feedback helps correct small pose errors and maintain stable contact. Although insert USB remains challenging due to tight tolerances, VISTA still outperforms the baseline. These results align with simulation, indicating that contact-rich manipulation benefits from VISTA's spherical representation.

\begin{figure}[H]
    \centering
    \includegraphics[width=\linewidth]{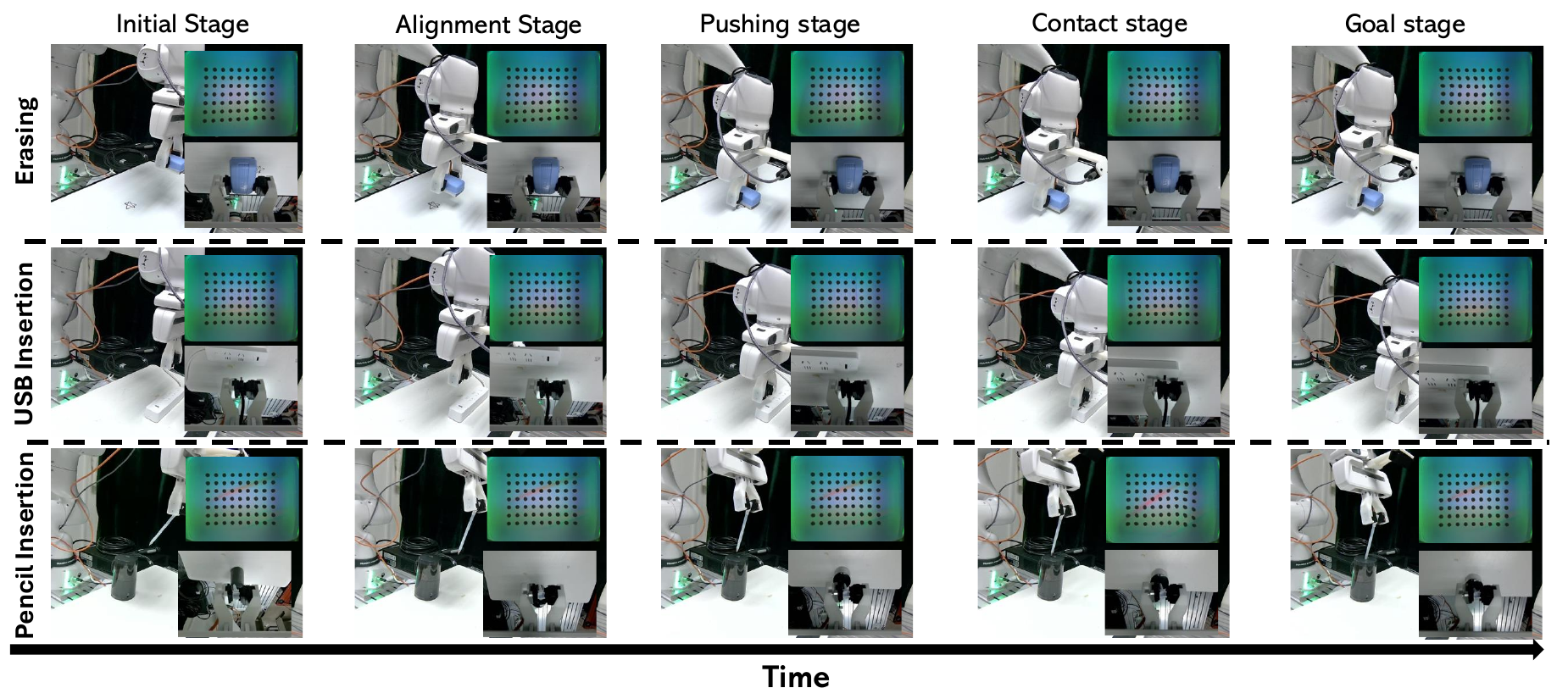}
    \caption{Real-robot task rollouts from VISTA, covering tight insertion and sustained surface-contact manipulation.}
    \label{fig:real_environments}
    \vspace{-4mm}
\end{figure}

%% file: sections/conclusion.tex
\section{Limitations}
\label{sec:limitations}
First, the equivariance guarantee is workspace-level and exact only for the selected finite group; it does not provide intrinsic tactile equivariance to arbitrary in-hand object rotations. Second, the equivariance analysis assumes rigidly mounted sensors and joint robot--object--sensor rotations that leave the local visual and tactile observations unchanged before workspace correction. Background or illumination changes, gravity-dependent contact, in-gripper slip, calibration errors, external cameras, and changing extrinsics can violate this assumption. Explicitly modeling local tactile symmetries and calibration uncertainty remains future work.

Finally, despite sub-\(100\)\,ms model-side inference, the deployed policy executes six insertion actions or eight erasing actions at \(10\)\,Hz before replanning. New tactile observations thus affect decisions only every \(0.6\)--\(0.8\)\,s; the system is not a high-rate tactile closed loop and may miss transient slip or contact changes. We have not studied the trade-off of shorter chunks. Adaptive or event-triggered replanning and fast tactile residual control are promising extensions.

\section{Conclusion}
\label{sec:conclusion}

We presented VISTA, a workspace-level equivariant visuotactile diffusion policy for data-efficient contact-rich manipulation. VISTA projects wrist RGB and tactile observations into spherical tokens, lets each visual direction query tactile features before harmonic projection, and aligns the fused representation with the end-effector orientation before equivariant diffusion decoding. Across eight simulation tasks, VISTA reaches an 80.9\% mean success rate with only 50 demonstrations per task, improving over the image-only spherical policy by 12.8 points and over EquiDiff + Tac by 15.9 points. With 100 demonstrations, it reaches 84.7\%, and on three real-robot tasks it achieves 7.33/10 successful trials compared with 1.33/10 for the strongest equivariant baseline. These results support the value of combining tactile spherical lifting and direction-dependent fusion with workspace-level equivariance. Future work could study broader object distributions, faster tactile feedback, multiple tactile streams, and automatic selection of task symmetries.

%% file: sections/acknowledgments.tex
\section*{Acknowledgments}

The work described in this paper was supported in part by a grant from the NSFC/RGC Joint Research Scheme sponsored by the Research Grants Council of the Hong Kong Special Administrative Region, China and the National Natural Science Foundation of China (Project No.~N\_CUHK410/23), in part by grants from the Research Grants Council of the Hong Kong Special Administrative Region, China (Project Nos.~14208424, C4004-24Y, and T45-401/22-N), and in part by a grant from the National Natural Science Foundation of China (Project No.~62322318).

%% file: sections/appendix.tex
\section{Appendix}
\label{appendix}

This appendix provides additional implementation and experimental details for VISTA. We first describe the tensor interfaces and architectural components used by the observation encoder and equivariant diffusion policy. We then give details on spherical lifting, tactile-conditioned fusion, equivariance correction, diffusion implementation, ablation variants, the simulation and real-robot protocols, and inference latency. Finally, we report full task-level simulation results and data-efficiency numbers.

\begin{itemize}
    \item \textbf{VISTA Implementation Details} in Appendix~\ref{app:vista_details} describe the observation and action tensors, the main tensor chain, and the coefficient interfaces used by the policy.

    \item \textbf{Spherical Lifting Details} in Appendix~\ref{app:spherical_lifting} specify the sphere grid, visual and tactile projection operators, and projection from fused sphere tokens to spherical harmonic coefficients.

    \item \textbf{Proof of Fusion Properties} in Appendix~\ref{app:fusion_proof} provides the full proof that the tactile-conditioned spherical fusion module is equivariant to visual-token permutations.

    \item \textbf{Equivariance Correction Details} in Appendix~\ref{app:equivariance_correction_details} explain how local sensor-frame spherical coefficients are aligned to the workspace frame using the end-effector orientation.

    \item \textbf{Equivariant Diffusion Details} in Appendix~\ref{app:diffusion_details} describe the action representation, finite-group sampling procedure, weight sharing over group samples, and denoising objective.

    \item \textbf{Ablation Variants} in Appendix~\ref{app:ablation_variants} define ISP, ISP+Concat-Tac, Early Fusion, VISTA-global-\(l0\), and VISTA, including where tactile information is injected.

    \item \textbf{Simulation Protocol Details} in Appendix~\ref{app:sim_protocol_details} summarize task randomization, finite-group choices, and the main comparison baselines.

    \item \textbf{Real-Robot Protocol Details} in Appendix~\ref{app:real_protocol_details} describe the hardware setup, training data, rollout frequency, evaluation perturbations, and success criteria.

    \item \textbf{Inference Latency} in Appendix~\ref{app:inference_latency} reports offline observation-to-chunk wall-clock time on the deployment GPU, replanning windows, and a two-sensor compute-scaling measurement.

    \item \textbf{Detailed Simulation Results} in Appendix~\ref{app:detailed_sim_results} report full per-task success rates for the main comparison, backbone ablation, fusion ablation, and data-efficiency study.
\end{itemize}

\subsection{VISTA Implementation Details}
\label{app:vista_details}

This appendix provides implementation details for the observation encoder, spherical lifting, fusion variants, and equivariant diffusion module used in VISTA.

\paragraph{Observation and action tensors.}
The observation dictionary contains a wrist RGB image, a tactile RGB image, an end-effector pose, and a gripper state. For an observation horizon \(H_o\), the main tensors are
\[
    I\in\RR^{B\times H_o\times 3\times H\times W},
    \qquad
    \tau\in\RR^{B\times H_o\times 3\times H\times W},
\]
together with end-effector position, end-effector quaternion, and gripper state. In our experiments, \(H_o=2\), the action prediction horizon is \(H_p=16\), and the policy executes the first \(6\) or \(8\) predicted actions before replanning.

Before normalization, we subtract a fixed workspace center \(c_{\mathrm{ws}}=(0,0,0.8)\) from end-effector positions and translational action components. This improves numerical conditioning and does not affect the equivariance argument, since it is a fixed choice of coordinate origin for the dataset.

The observation time axis is flattened into the batch axis before image encoding:
\[
    [B,H_o,\cdots]\rightarrow [BH_o,\cdots].
\]
Let \(N=BH_o\). The wrist image and tactile image are then encoded into feature maps of spatial size \(7\times 7\):
\[
    F^I\in\RR^{N\times C_I\times 7\times 7},
    \qquad
    F^T\in\RR^{N\times C_T\times 7\times 7}.
\]
If the image encoder produces a different spatial size, adaptive average pooling is used to obtain the \(7\times 7\) interface.

\paragraph{Main tensor chain.}
The following table summarizes the main tensor shapes used by the observation encoder and diffusion policy.

\begin{table}[h]
\centering
\caption{Main tensor shapes in VISTA. Here \(N=BH_o\), \(P\) is the number of sampled sphere points, and \(N_G\) is the number of finite-group samples.}
\label{tab:appendix_tensor_shapes}
\begingroup
\small
\setlength{\tabcolsep}{5pt}
\renewcommand{\arraystretch}{1.15}
\begin{tabular}{@{}ll@{}}
\toprule
Stage & Shape \\
\midrule
Wrist RGB input & \([B,H_o,3,H,W]\) \\
Tactile RGB input & \([B,H_o,3,H,W]\) \\
Flattened image inputs & \([N,3,H,W]\) \\
Visual feature map & \([N,C_I,7,7]\) \\
Tactile feature map & \([N,C_T,7,7]\) \\
Visual sphere tokens & \([N,P,C_I]\) \\
Tactile sphere tokens & \([N,P,C_T]\) \\
Fused sphere tokens & \([N,P,C]\) \\
Spherical harmonic coefficients & \([N,C,(l_{\max}+1)^2]\) \\
$SO3$ coefficients after spherical stack & \([N,128,\sum_{\ell=0}^{l_{\max}}(2\ell+1)^2]\) \\
Observation output & \([B,H_o,128,\sum_{\ell=0}^{l_{\max}}(2\ell+1)^2]\) \\
Finite-group observation samples & \([B,H_o,128,N_G]\) \\
Finite-group action samples & \([B,H_p,64,N_G]\) \\
Diffusion U-Net input & \([BN_G,H_p,64]\) \\
Predicted action noise & \([B,H_p,10]\) \\
\bottomrule
\end{tabular}
\endgroup
\end{table}

In the current implementation, \(l_{\max}=6\). Therefore, \((l_{\max}+1)^2=49\) for spherical coefficients and \(\sum_{\ell=0}^{6}(2\ell+1)^2=455\) for \(\SO(3)\) coefficients.

\subsection{Spherical Projection Details}
\label{app:spherical_lifting}

\paragraph{Sphere grid.}
VISTA samples a set of sphere points \(\{s_p\}_{p=1}^{P}\subset\Sph\) using a HEALPix grid restricted to the forward-facing hemisphere. Each point has spherical harmonic values \(Y_{p,k}\) for degrees up to \(l_{\max}\), and a quadrature weight \(\omega_p\). In implementation, we use uniform weights normalized by \(\sqrt{P}\).

Although spherical harmonics are defined on the full sphere, VISTA uses the hemisphere as a fixed sensor-domain sampling window rather than as an exact full-sphere quadrature rule. The resulting coefficients should therefore be interpreted as learned, band-limited sensor features expressed in a spherical harmonic basis, not as unbiased estimates of a complete spherical signal. This does not change the equivariance correction used by VISTA: the hemisphere is fixed in the local end-effector frame, and global workspace rotations act through the end-effector orientation before the equivariant diffusion decoder.

\paragraph{Visual projection.}
The visual branch follows the image-to-sphere lifting interface used in prior spherical visual policies~\cite{i2s}. Let \(g_{hw}\in\RR^2\) denote the normalized coordinate of a feature-map location and let \(u_p\in\RR^2\) denote the orthographic projection of sphere point \(s_p\) into the feature-map plane. The visual projection weights are initialized by a Gaussian kernel,
\begin{equation}
    \widetilde{W}^{I}_{hwp}
    =
    \exp
    \left(
    -
    \frac{\|g_{hw}-u_p\|_2^2}{2\sigma^2}
    \right),
    \qquad
    W^{I}_{hwp}
    =
    \frac{\widetilde{W}^{I}_{hwp}}
    {\sum_{h',w'}\widetilde{W}^{I}_{h'w'p}}.
    \label{eq:appendix_visual_projection}
\end{equation}
The visual sphere tokens are computed as
\begin{equation}
    v_{n,p,c}
    =
    \operatorname{ReLU}
    \left(
    \sum_{h,w}
    F^I_{n,c,h,w}W^{I}_{hwp}
    \right).
    \label{eq:appendix_visual_tokens}
\end{equation}

\paragraph{Tactile projection.}
The tactile branch uses a separate learnable projection. We intentionally do not use the visual Gaussian initialization for tactile features. The reason is that the visual projection assumes that 2D feature-map coordinates correspond to viewing directions, while tactile RGB pixels correspond to local deformation patterns on the sensor surface. Their relation to workspace directions depends on contact mechanics, object geometry, and the local contact configuration. A visual-style Gaussian initialization would therefore impose an incorrect spatial prior on tactile features.

Instead, VISTA learns tactile lifting weights from scratch:
\begin{equation}
    W^{T}_{hwp}
    =
    \frac{
    \exp(L^{T}_{hwp})
    }{
    \sum_{h',w'}
    \exp(L^{T}_{h'w'p})
    },
    \label{eq:appendix_tactile_projection}
\end{equation}
where \(L^{T}\) is a learned parameter tensor. The tactile sphere tokens are
\begin{equation}
    \tau_{n,p,c}
    =
    \operatorname{ReLU}
    \left(
    \sum_{h,w}
    F^T_{n,c,h,w}W^{T}_{hwp}
    \right).
    \label{eq:appendix_tactile_tokens}
\end{equation}
When multiple tactile streams are used, VISTA applies this projection to each stream, adds a learned tactile-side embedding, and concatenates the resulting token sets.
\paragraph{Projection to spherical harmonics.}
Given fused sphere tokens \(f_{n,p,c}\), VISTA computes spherical harmonic coefficients by
\begin{equation}
    C_{n,c,k}
    =
    \sum_{p=1}^{P}
    f_{n,p,c}Y_{p,k}\omega_p.
    \label{eq:appendix_harmonic_coeff}
\end{equation}
These coefficients are then corrected by the end-effector orientation and passed to the spherical coefficient stack described in the main text.

\subsection{Proof of Fusion Properties}
\label{app:fusion_proof}

This section gives the full proof of Proposition~\ref{prop:index_equivariance_fusion}. For compactness, write the visual token matrix as \(V\in\RR^{P\times d_v}\) and the tactile token matrix as \(Z\in\RR^{P_t\times d_t}\). The attention module computes
\begin{equation}
    Q=VW_q,
    \qquad
    K=ZW_k,
    \qquad
    U=ZW_u,
\end{equation}
and
\begin{equation}
    \Lambda
    =
    \operatorname{softmax}
    \left(
    \frac{QK^\top}{\sqrt{d_h}}
    \right),
    \qquad
    M=\Lambda U,
\end{equation}
where the softmax is applied row-wise over tactile tokens. The fused output is
\begin{equation}
    \mathcal{F}(V,Z)
    =
    \operatorname{LN}
    \left(
    R(V)+\psi([V,M])
    \right),
    \label{eq:appendix_fusion_matrix}
\end{equation}
where \(R\), \(\psi\), and layer normalization are applied token-wise.

\paragraph{Visual-token permutation equivariance.}
Let \(\Pi\in\RR^{P\times P}\) be a permutation matrix over visual sphere tokens. If \(V'=\Pi V\), then
\[
    Q' = V'W_q = \Pi Q.
\]
The attention logits become
\[
    \frac{Q'K^\top}{\sqrt{d_h}}
    =
    \Pi
    \frac{QK^\top}{\sqrt{d_h}}.
\]
Since the softmax is row-wise, permuting the rows before softmax permutes the rows after softmax:
\[
    \Lambda'=\Pi\Lambda.
\]
Thus
\[
    M'=\Lambda'U=\Pi\Lambda U=\Pi M.
\]
Finally, because \(R\), \(\psi\), and layer normalization are token-wise operations,
\[
    \mathcal{F}(\Pi V,Z)
    =
    \Pi\mathcal{F}(V,Z).
\]
This proves visual-index equivariance.

\subsection{Equivariance Correction Details}
\label{app:equivariance_correction_details}

The fused spherical coefficients are initially expressed in a local sensor-rig frame. Let \(R_t\in\SO(3)\) be the end-effector orientation in the workspace frame. For each harmonic degree \(\ell\), VISTA applies
\[
    \widehat{c}_{\ell,t}
    =
    D^\ell(R_t)c_{\ell,t}.
\]
Under a global workspace rotation \(g\), the local visual and tactile measurements are unchanged, while the end-effector orientation becomes \(gR_t\). Therefore,
\begin{align}
    \widehat{c}_{\ell,t}(g\cdot o_t)
    &=
    D^\ell(gR_t)c_{\ell,t}(o_t)
    \\
    &=
    D^\ell(g)D^\ell(R_t)c_{\ell,t}(o_t)
    \\
    &=
    D^\ell(g)\widehat{c}_{\ell,t}(o_t).
\end{align}
This is the coefficient-level equivariance used by the observation encoder.

The tactile sensor may have a fixed extrinsic transform relative to the end effector. This does not change the argument: a fixed extrinsic only changes the local sensor-rig coordinate convention and can be absorbed into the learned tactile lifting and fusion modules. The global transformation still acts by left multiplication on the end-effector orientation.

\subsection{Equivariant Diffusion Details}
\label{app:diffusion_details}

\paragraph{Action representation.}
Each action contains translation, two rotation axes, and a scalar gripper command. Translation and rotation axes are represented as vector features, while the gripper command is represented as a scalar. This gives a natural equivariant feature structure: the vector components transform under rotations, and the gripper component remains invariant.

\paragraph{Finite-group sampling.}
The observation encoder outputs coefficients of shape
\[
    [B,H_o,128,455],
\]
and the action encoder outputs coefficients of shape
\[
    [B,H_p,64,455].
\]
These coefficients are evaluated on a finite rotation group. The \(\SO(3)\) variants use an icosahedral group \(I_{60}\) with \(N_G=60\) samples, while the \(\SO(2)\) variants use a cyclic group \(C_8\) with \(N_G=8\) samples. After sampling, the tensors have shapes
\[
    [B,H_o,128,N_G],
    \qquad
    [B,H_p,64,N_G].
\]
The group axis is then flattened into the batch axis:
\[
    [B,H_p,64,N_G]\rightarrow [BN_G,H_p,64],
\]
and the same temporal 1D U-Net is applied to every group element.

The group axis is treated as additional samples only after the steerable features have been evaluated on the selected finite group. The temporal U-Net uses shared weights and group-independent normalization layers such as GroupNorm or LayerNorm in the diffusion path; we avoid BatchNorm over the flattened \(BN_G\) axis because batch statistics that mix group samples can introduce group-dependent numerical effects and weaken equivariance.

\paragraph{Why weight sharing gives equivariance.}
For an element \(g\) in the selected finite group, acting on the input permutes the finite-group samples. Since the U-Net is shared across all group elements, applying the U-Net before or after this permutation gives the same result up to the same permutation. The final equivariant decoder maps the permuted group signal back to transformed action vectors and an invariant gripper scalar.

\paragraph{Training objective.}
Training follows the standard diffusion-policy denoising objective. A clean action chunk \(\mathbf{U}\) is corrupted by Gaussian noise at diffusion timestep \(k\). The denoiser predicts the added noise:
\[
    \LL
    =
    \mathbb{E}_{\mathbf{U},k,\eps}
    \left[
    \left\|
    \Gamma_\theta(\mathbf{U}^k,E_\theta(\OO),k)
    -
    \eps
    \right\|_2^2
    \right].
\]
In our implementation, no action dimensions are visible through the conditioning mask during denoising; the policy generates the entire action chunk conditioned on the visuotactile observation history.

\subsection{Ablation Variants}
\label{app:ablation_variants}

The ablation family isolates how tactile information is injected into the ISP-style spherical backbone. Unless noted, all variants share the same visual encoder, visual spherical lifting, harmonic projection, end-effector equivariance correction, finite-group diffusion decoder, demonstration budget, 10D action interface, and task-dependent group in Table~\ref{tab:app_task_group_protocol}. Tactile-conditioned variants use the left tactile stream. Table~\ref{tab:ablation_variant_defs} summarizes the only intended differences.

\begin{table}[htbp]
\centering
\caption{Ablation variants. Fusion is the only intended architectural difference; the visual spherical backbone and decoder are held fixed.}
\label{tab:ablation_variant_defs}
\begingroup
\small
\setlength{\tabcolsep}{3.2pt}
\renewcommand{\arraystretch}{1.15}
\begin{tabular}{@{}llll@{}}
\toprule
\textbf{Variant} & \textbf{Tactile representation} & \textbf{Fusion stage} & \textbf{Fusion operator} \\
\midrule
ISP & none & -- & -- \\
VISTA-global-\(l0\) & invariant global vector & \(\ell=0\) harmonic coefficients & addition \\
Early Fusion & 2D tactile feature map & before spherical lifting & \(1\times 1\) conv on concatenated maps \\
ISP+Concat-Tac & spherical tokens & on the sphere, before harmonics & channel-wise concat + linear \\
VISTA & spherical tokens & on the sphere, before harmonics & cross-attention (visual \(Q\), tactile \(K/V\)) \\
\bottomrule
\end{tabular}
\endgroup
\end{table}

\paragraph{ISP.}
Image-only Image-to-Sphere Policy~\cite{hu20263d}. It lifts wrist RGB to a steerable spherical field, applies harmonic projection and end-effector correction, and decodes actions with the same finite-group diffusion policy. It receives no tactile input and therefore measures the contribution of the visual spherical backbone alone.

\paragraph{VISTA-global-\(l0\).}
The visual branch follows ISP. Tactile RGB is encoded into a single invariant vector and added only to the \(\ell=0\) spherical-coefficient slice after harmonic projection. Higher-order visual harmonics are not modulated by touch. This tests tactile input as global contact context rather than a direction-dependent cue.

\paragraph{Early Fusion.}
Visual and tactile 2D feature maps are concatenated in the image plane and fused by a \(1\times 1\) convolution, group normalization, and ReLU. The fused map is then lifted to the sphere with the visual projection and passed through the same harmonic correction. There is no separate learned tactile sphere. This tests whether 2D feature-map fusion before lifting is sufficient.

\paragraph{ISP+Concat-Tac.}
Visual and tactile maps are lifted separately: visual tokens use the Gaussian viewing-direction projection, and tactile tokens use the learned tactile projection in Appendix~\ref{app:spherical_lifting}. At each sphere index the two tokens are concatenated channel-wise and linearly projected, then passed through harmonic projection and correction. This tests spherical tactile conditioning without cross-attention. Unlike Early Fusion, both modalities occupy the sphere before fusion; unlike VISTA, fusion does not let each visual direction query the tactile tokens.

\paragraph{VISTA.}
The full method. After the same separate visual and tactile spherical liftings as ISP+Concat-Tac, a permutation-equivariant cross-attention block uses visual tokens as queries and tactile tokens as keys/values, then applies residual token-wise fusion before harmonic projection. This is the only variant that injects tactile evidence into each visual spherical direction through learned attention.

\paragraph{$SO(3)$ and $SO(2)$ instantiations.}
Each variant above can be paired with \(I_{60}\) or \(C_8\). Reported simulation numbers use the same task-dependent choice as VISTA in Table~\ref{tab:app_task_group_protocol}.

\subsection{Simulation Protocol Details}
\label{app:sim_protocol_details}

\paragraph{Checkpoint selection and evaluation.}
All simulation methods are trained with a single seed. Demonstrations are split \(8{:}2\) into training and validation sets, and the checkpoint with the lowest validation diffusion loss is retained. Reported success rates then use 40 rollouts with seeds 1000--1039; these rollouts are not used to choose the checkpoint.

\paragraph{Task randomization and group choice.}
Table~\ref{tab:app_task_group_protocol} summarizes the main randomization and finite-group instantiation used for each simulation task. We use \(I_{60}\) when the task requires full 3D orientation handling, and \(C_8\) when the dominant symmetry is rotation around gravity or the task is primarily planar/axial.

\begin{table}[htbp]
\centering
\caption{Simulation task randomization and VISTA group instantiation. The ranges summarize the data-generation protocol used by the benchmark.}
\label{tab:app_task_group_protocol}
\begingroup
\small
\setlength{\tabcolsep}{4pt}
\renewcommand{\arraystretch}{1.12}
\begin{tabular}{@{}lll@{}}
\toprule
Task & VISTA group & Main randomized factors \\
\midrule
Grasp classify & \(C_8\) & object class, grasp height, gripper closing pose \\
Insert HDMI & \(I_{60}\) & slot \(x/y\), in-hand pitch/yaw, staging pose noise \\
Insert hole & \(I_{60}\) & slot \(x/y\), slot yaw, target pitch, grasp height \\
Insert tube & \(I_{60}\) & slot \(x/y\), grasp height, staging pose noise \\
Lift bottle & \(C_8\) & bottle \(x/y\), bottle yaw, grasp pitch \\
Lift can & \(C_8\) & can size, can \(x/y\), grasp pitch \\
Pull out key & \(C_{8}\) & slot \(x/y\), slot yaw, key yaw, grasp depth, over-rotation \\
Put bottle in shelf & \(C_8\) & shelf \(x\), bottle \(y\), grasp height/pitch, lift height \\
\bottomrule
\end{tabular}
\endgroup
\end{table}

\paragraph{Main comparison baselines.}
All methods predict the same 10D absolute end-effector action and use the same demonstration source. Sharing the action interface does not imply identical sensors: ManiFeel, UniVTAC, and EquiDiff + Tac use wrist RGB with both left and right tactile RGB in simulation, VITAL uses wrist RGB with tactile RGB, and VISTA uses wrist RGB with the left tactile stream. On the real robot, EquiDiff + Tac and VISTA both use the single left GelSight Mini.

\paragraph{ManiFeel.}
A non-equivariant visuotactile diffusion policy~\cite{luu2025manifeel}. Wrist RGB and each tactile RGB image are encoded by independent ResNet-18 branches, concatenated with proprioception, and used as global conditioning for a 1D diffusion U-Net. It does not use spherical lifting or a rotation-equivariant decoder.

\paragraph{VITAL.}
An ACT-style CVAE transformer~\cite{george2025vital} with CLIP-pretrained vision and GelSight tactile encoders. Wrist RGB, tactile RGB, and proprioception are fused in the transformer rather than on a sphere.

\paragraph{UniVTAC.}
The official UniVTAC ACT baseline~\cite{chen2026univtac}, using a pretrained tactile reconstruction encoder on left and right tactile streams together with wrist RGB and proprioception. Like VITAL, it is a transformer imitation policy without workspace equivariance.

\paragraph{EquiDiff + Tac.}
A tactile-conditioned EquiDiff policy~\cite{EquiDiff}. Wrist RGB is encoded by a \(C_8\)-equivariant ResNet. Each tactile RGB image is encoded by a ResNet-18 whose features are treated as trivial \(C_8\) representations. These features are concatenated with proprioception, mapped by a \(C_8\)-equivariant linear layer, and used to condition a \(C_8\)-equivariant diffusion U-Net. EquiDiff + Tac therefore has an equivariant decoder and tactile input, but no spherical tactile lifting, no visual--tactile spherical fusion, and no \(I_{60}\) instantiation. It is trained and evaluated with the same data, action representation, horizons, and protocol as VISTA. The real-robot comparison uses this same architecture with the single left GelSight stream.

\subsection{Real-Robot Protocol Details}
\label{app:real_protocol_details}

The real-robot experiments use a Franka arm with an end-effector-mounted Orbbec wrist RGB camera and a left GelSight Mini tactile sensor. Each task uses 100 real demonstrations for training; all demonstrations are real-only and no simulation checkpoint is transferred. We train with a single seed, split the demonstrations \(8{:}2\) into training and validation sets, save checkpoints every 30 epochs, and select the snapshot with the lowest validation diffusion loss. The 10 reported trials per task are fully held-out and are not used for checkpoint selection. During rollout, the policy observes two history frames, predicts action chunks, and executes actions at 10 Hz; the Franka-side controller runs at 300 Hz. The wrist RGB stream runs at 30 Hz and the GelSight Mini stream runs at 10 Hz.

For insert USB and insert pencil, the evaluation initial end-effector pose is uniformly perturbed by \(\pm 1\) cm along each Cartesian axis, while the socket/sharpener is placed randomly within the visible workspace and the in-hand object is given a rotational offset. In erase, the whiteboard is placed on the table and the marker stroke is drawn randomly within the wrist-camera-visible area. A trial is counted successful if the USB is inserted into the socket, the pencil is inserted and activates the sharpener, or the target marker stroke is removed by wiping.

\subsection{Inference Latency}
\label{app:inference_latency}

We report \emph{offline model-side} inference latency: the wall-clock time from a batched observation tensor already on GPU to a denoised action chunk. This is not a full sensor-to-actuation delay; image capture, host-side preprocessing, and robot communication are excluded. Timing uses a single RTX~3090, batch size 1, and DDIM with 8 denoising steps~\cite{song2020denoising}, matching the real-robot sampler. Numbers in Table~\ref{tab:inference_latency} are averages over repeated forward passes of the trained one-tactile models.

\begin{table}[htbp]
\centering
\caption{Offline observation-to-chunk inference latency on one RTX~3090 with DDIM-8. EquiDiff + Tac uses its \(C_8\) decoder.}
\label{tab:inference_latency}
\begingroup
\small
\setlength{\tabcolsep}{8pt}
\renewcommand{\arraystretch}{1.15}
\begin{tabular}{@{}lcc@{}}
\toprule
\textbf{Method} & \textbf{Group} & \textbf{Latency (ms)} \\
\midrule
ISP & \(C_8\) & 61.25 \\
ISP & \(I_{60}\) & 65.66 \\
VISTA & \(C_8\) & 63.74 \\
VISTA & \(I_{60}\) & 71.54 \\
EquiDiff + Tac & \(C_8\) & 79.88 \\
\bottomrule
\end{tabular}
\endgroup
\end{table}

Actions are played at \(10\)\,Hz. On the real robot the policy executes 8 predicted actions before replanning for erase (\(0.8\)\,s) and 6 actions for insertion (\(0.6\)\,s). All tabulated latencies are far below these windows, so a complete diffusion pass finishes before the next replan. This does not make the controller a \(10\)\,Hz tactile closed loop: newly observed tactile frames are used only at the next chunk. Scaling the decoder group from \(C_8\) to \(I_{60}\) increases VISTA latency by \(7.8\)\,ms at nearly identical parameter count.

Task experiments use one GelSight stream. Separately, we timed a randomly initialized two-stream VISTA-\(I_{60}\) that encodes each tactile image, lifts both to the shared sphere, and concatenates tokens with a tactile-side embedding. Latency rose from \(70.18\)\,ms (one stream) to \(73.13\)\,ms (two streams) in that isolated measurement; it is a compute-scaling check, not a trained two-sensor policy, and is not mixed with Table~\ref{tab:inference_latency}.

\subsection{Detailed Simulation Results}
\label{app:detailed_sim_results}

This section reports the full task-level simulation results in the main paper and Figure ~\ref{fig:sim_ablation_efficiency_summary}.

\begin{table}[htbp]
\centering
\caption{\textbf{Main simulation comparison.} Success rates in simulation. All methods are trained with 50 demonstrations per task and evaluated over 40 rollouts. Missing entries are marked as -- and are excluded from the mean.}
\label{tab:simulation_results_main_revised_app}
\begingroup
\small
\setlength{\tabcolsep}{3.4pt}
\renewcommand{\arraystretch}{1.18}
\begin{tabular}{@{}lccccccccc@{}}
\toprule
\textbf{Method} &
\makecell{\textbf{Grasp}\\\textbf{Classify}} &
\makecell{\textbf{Insert}\\\textbf{HDMI}} &
\makecell{\textbf{Insert}\\\textbf{Hole}} &
\makecell{\textbf{Insert}\\\textbf{Tube}} &
\makecell{\textbf{Lift}\\\textbf{Bottle}} &
\makecell{\textbf{Lift}\\\textbf{Can}} &
\makecell{\textbf{Pull Out}\\\textbf{Key}} &
\makecell{\textbf{Put Bottle}\\\textbf{in Shelf}} &
\textbf{Mean} \\
\midrule
ManiFeel & \textbf{100.0} & 12.5 & 32.5 & 45.0 & 55.0 & 10.0 & 95.0 & 40.0 & 48.8 \\
VITAL & \textbf{100.0} & 7.5 & 27.5 & 55.0 & 20.0 & 17.5 & 65.0 & 57.5 & 43.8 \\
UniVTAC & 60.0 & 5.0 & 17.5 & 32.5 & 0.0 & 0.0 & 37.5 & 17.5 & 21.3 \\
EquiDiff + Tac & 95.0 & 10.0 & 42.5 & 92.5 & 57.5 & 37.5 & \textbf{97.5} & \textbf{87.5} & 65.0 \\
VISTA & \textbf{100.0} & \textbf{40.0} & \textbf{55.0} & \textbf{95.0} & \textbf{80.0} & \textbf{92.5} & \textbf{97.5} & \textbf{87.5} & \textbf{80.9} \\
\bottomrule
\end{tabular}
\endgroup
\end{table}

\begin{table}[htbp]
\centering
\caption{\textbf{ISP versus VISTA.} Backbone ablation in simulation. ISP is the image-only spherical policy baseline; VISTA adds tactile spherical lifting and tactile-conditioned spherical fusion.}
\label{tab:isp_vista_ablation_revised_app}
\begingroup
\small
\setlength{\tabcolsep}{4pt}
\renewcommand{\arraystretch}{1.15}
\begin{tabular}{@{}lccccccccc@{}}
\toprule
\textbf{Method} &
\makecell{\textbf{Grasp}\\\textbf{Classify}} &
\makecell{\textbf{Insert}\\\textbf{HDMI}} &
\makecell{\textbf{Insert}\\\textbf{Hole}} &
\makecell{\textbf{Insert}\\\textbf{Tube}} &
\makecell{\textbf{Lift}\\\textbf{Bottle}} &
\makecell{\textbf{Lift}\\\textbf{Can}} &
\makecell{\textbf{Pull Out}\\\textbf{Key}} &
\makecell{\textbf{Put Bottle}\\\textbf{in Shelf}} &
\textbf{Mean} \\
\midrule
ISP & 87.5 & 32.5 & \textbf{55.0} & 92.5 & 15.0 & 92.5 & 92.5 & 77.5 & 68.1 \\
VISTA & \textbf{100.0} & \textbf{40.0} & \textbf{55.0} & \textbf{95.0} & \textbf{80.0} & \textbf{92.5} & \textbf{97.5} & \textbf{87.5} & \textbf{80.9} \\
\bottomrule
\end{tabular}
\endgroup
\end{table}

\begin{table}[htbp]
\centering
\caption{\textbf{Fusion ablation.} Fusion ablation on all eight simulation tasks. VISTA-global-\(l0\) injects tactile information as invariant context, Early Fusion fuses 2D feature maps before spherical projection, ISP+Concat-Tac concatenates visual and tactile spherical tokens, and VISTA performs tactile-conditioned spherical fusion before harmonic projection.}
\label{tab:fusion_ablation_revised_app}
\begingroup
\small
\setlength{\tabcolsep}{3.4pt}
\renewcommand{\arraystretch}{1.15}
\begin{tabular}{@{}lccccccccc@{}}
\toprule
\textbf{Method} &
\makecell{\textbf{Grasp}\\\textbf{Classify}} &
\makecell{\textbf{Insert}\\\textbf{HDMI}} &
\makecell{\textbf{Insert}\\\textbf{Hole}} &
\makecell{\textbf{Insert}\\\textbf{Tube}} &
\makecell{\textbf{Lift}\\\textbf{Bottle}} &
\makecell{\textbf{Lift}\\\textbf{Can}} &
\makecell{\textbf{Pull Out}\\\textbf{Key}} &
\makecell{\textbf{Put Bottle}\\\textbf{in Shelf}} &
\textbf{Mean} \\
\midrule
ISP & 87.5 & 32.5 & 55.0 & 92.5 & 15.0 & \textbf{92.5} & 92.5 & 77.5 & 68.1 \\
VISTA-global-\(l0\) & \textbf{100.0} & 17.5 & 52.5 & 90.0 & 77.5 & 77.5 & 57.5 & 82.5 & 69.4 \\
Early Fusion & \textbf{100.0} & \textbf{42.5} & 37.5 & 90.0 & 75.0 & 77.5 & 82.5 & 77.5 & 72.8 \\
ISP+Concat-Tac & \textbf{100.0} & 37.5 & \textbf{67.5} & 85.0 & 67.5 & \textbf{92.5} & 77.5 & 65.0 & 74.1 \\
VISTA & \textbf{100.0} & 40.0 & 55.0 & \textbf{95.0} & \textbf{80.0} & \textbf{92.5} & \textbf{97.5} & \textbf{87.5} & \textbf{80.9} \\
\bottomrule
\end{tabular}
\endgroup
\end{table}

\begin{table}[H]
\centering
\caption{\textbf{Data efficiency.} Detailed data-efficiency comparison in simulation. Missing entries are marked as -- and are excluded from the mean. VISTA is reported as the task-appropriate instantiation selected from task geometry before training.}
\label{tab:app_data_efficiency_app}
\begingroup
\setlength{\tabcolsep}{2.8pt}
\renewcommand{\arraystretch}{1.14}
\begin{tabular}{@{}lcccccccccc@{}}
\toprule
\textbf{Method} &
\textbf{\#Demo} &
\makecell{\textbf{Grasp}\\\textbf{Classify}} &
\makecell{\textbf{Insert}\\\textbf{HDMI}} &
\makecell{\textbf{Insert}\\\textbf{Hole}} &
\makecell{\textbf{Insert}\\\textbf{Tube}} &
\makecell{\textbf{Lift}\\\textbf{Bottle}} &
\makecell{\textbf{Lift}\\\textbf{Can}} &
\makecell{\textbf{Pull Out}\\\textbf{Key}} &
\makecell{\textbf{Put Bottle}\\\textbf{in Shelf}} &
\textbf{Mean} \\
\midrule
ManiFeel & 50 & \textbf{100.0} & 12.5 & 32.5 & 45.0 & 55.0 & 10.0 & 95.0 & 40.0 & 48.8 \\
ManiFeel & 100 & \textbf{100.0} & 25.0 & 60.0 & 92.5 & 70.0 & 77.5 & 95.0 & 67.5 & 73.4 \\
VITAL & 50 & \textbf{100.0} & 7.5 & 27.5 & 55.0 & 20.0 & 17.5 & 65.0 & 57.5 & 43.8 \\
VITAL & 100 & \textbf{100.0} & 10.0 & 20.0 & 72.5 & 50.0 & 45.0 & 72.5 & 62.5 & 54.1 \\
UniVTAC & 50 & 60.0 & 5.0 & 17.5 & 32.5 & 0.0 & 0.0 & 37.5 & 17.5 & 21.3 \\
UniVTAC & 100 & 60.0 & 0.0 & 15.0 & 42.5 & 0.0 & 10.0 & 65.0 & 60.0 & 31.6 \\
EquiDiff + Tac & 50 & 95.0 & 10.0 & 42.5 & 92.5 & 57.5 & 37.5 & 97.5 & \textbf{87.5} & 65.0 \\
EquiDiff + Tac & 100 & 97.5 & 27.5 & 72.5 & \textbf{95.0} & 75.0 & 77.5 & \textbf{100.0} & 60.0 & 75.6 \\
VISTA & 50 & \textbf{100.0} & 40.0 & 55.0 & \textbf{95.0} & \textbf{80.0} & 92.5 & 97.5 & \textbf{87.5} & 80.9 \\
VISTA & 100 & \textbf{100.0} & \textbf{42.5} & \textbf{80.0} & 92.5 & \textbf{80.0} & \textbf{100.0} & 97.5 & 85.0 & \textbf{84.7} \\
\bottomrule
\end{tabular}%
\endgroup
\end{table}